\newtheorem{prop}{Proposition}
\DeclareMathOperator{\Var}{Var}
\title{\LARGE \bf
Reinforcement Learning with Non-uniform State Representations \\for Adaptive Search
}
\let\vec\boldvec 
\newcommand{\robotloc}{\vec{x}}
\newcommand{\targetloc}{\vec{y}}
\newcommand{\state}{\vec{s}}
\DeclareMathOperator{\rmap}{q}
\author{Sandeep Manjanna$^{1}$, Herke van Hoof$^{2}$ and Gregory Dudek$^{1}$
\thanks{$^{1}$Sandeep Manjanna and Gregory Dudek are with Mobile Robotics Lab (MRL), Center for Intelligent Machines, McGill University, Montreal, Canada
        {\tt\small (msandeep, dudek)@cim.mcgill.ca}}%
\thanks{$^{2}$Herke van Hoof performed this work while at McGill, but is now with the Amsterdam Machine Learning Lab (AMLAB), University of Amsterdam, Amsterdam, Netherlands
        {\tt\small h.c.vanhoof@uva.nl}}%
\thanks{\textbf{~\copyright2018~IEEE. Personal use of this material is permitted. Permission from IEEE must be obtained for all other uses, in any current or future media, including reprinting/republishing this material for advertising or promotional purposes, creating new collective works, for resale or redistribution to servers or lists, or reuse of any copyrighted component of this work in other works. DOI: 10.1109/SSRR.2018.8468649}}}
\begin{document}

\maketitle
\thispagestyle{empty}
\pagestyle{empty}

\begin{abstract}

Efficient spatial exploration is a key aspect of search and rescue. 
In this paper, we present a search algorithm that generates efficient  trajectories that optimize the rate at which probability mass is covered by a searcher. This should allow an autonomous vehicle find one or more lost targets as rapidly as possible.  We do this by performing non-uniform sampling of the search region. The path generated minimizes the expected time to locate the missing target by visiting high  probability regions using non-myopic path generation based on reinforcement learning. We model the target probability distribution using  a classic \emph{mixture of Gaussians} model with means and mixture coefficients tuned according to the location and time of sightings of the lost target.
Key features of our search algorithm are the ability to employ a very general non-deterministic action model and the ability to generate action plans for any new probability distribution using the parameters learned on other similar looking distributions. One of the key contributions of this paper is the use of non-uniform state aggregation for policy search in the context of robotics.

We compare the paths generated by our algorithm with other 
accepted spatial coverage techniques such as distribution independent boustrophedonic coverage and model dependent spiral search. 
We present a proof showing that rewarding for clearing probability mass instead of locating the target does not bias the objective function. The experiments show that the learned policy outperforms several well-known baselines even in scenarios different from the one it has been trained on.

\end{abstract}

\section{INTRODUCTION}\label{sec:intro}

This paper addresses the discovery and synthesis of efficient search trajectories based on learned domain-specific policies and probabilistic inference. We also explore the use of non-uniform state space representations to enhance the performance of policy search methods.

Searching for a lost target or a person in the wilderness can be tedious, labor-intensive, imprecise, and very expensive. In some cases, the manual search and rescue operations are also unsafe for the humans involved. These reasons motivate the use of autonomous vehicles for such missions. In many search and rescue problems, timeliness is crucial. As time increases, the probability of success decreases considerably~\cite{pfau2013wilderness}, and every hour the effective search radius increases by approximately $3$ km \cite{setnicka1980wilderness}. We present an active sampling algorithm that generates an efficient path in real-time to search for a lost target, thus making an autonomous search and rescue mission realistic and more beneficial.

\begin{figure}[h]
\centering
\captionsetup{justification=centering}
\includegraphics[width=0.48\textwidth]{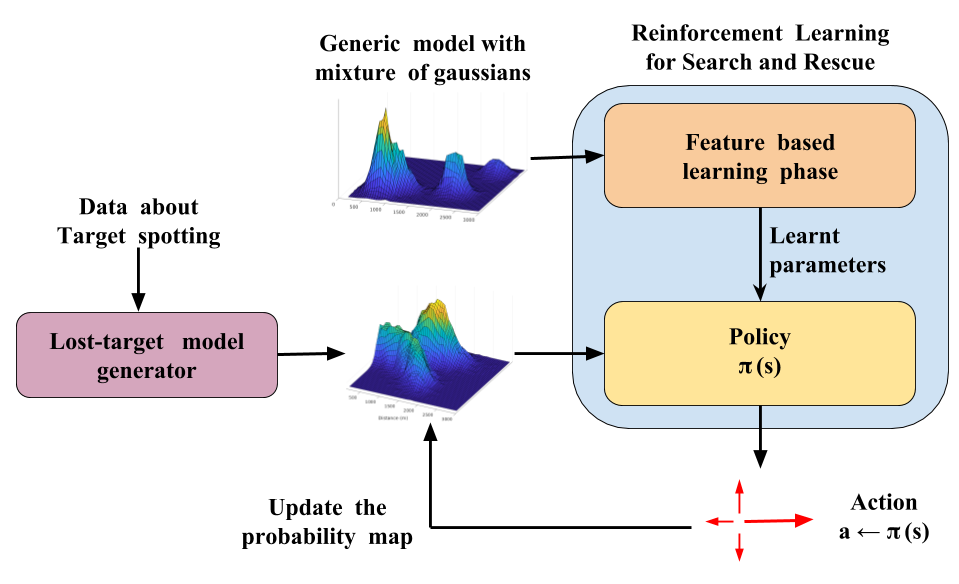}
\caption{Overview of our search and rescue approach.}
\vspace{-2.0em}
\label{fig:overview}
\end{figure}

Given an approximate region of interest, the first task is to decide where to search. Within the region of interest, some areas have higher probability of containing the target than the others. Data shows that more than $90\%$ of searches based on a probability distribution over the region have been resolved successfully within a short duration \cite{koester2008lost}. We model the spatial field of search using classic Gaussian mixture models based on sightings of the lost target. The modeled probability distribution map over the search region is used as an input for planning the search paths. The traditional way to assign probabilities to search regions is with a distance ring model: a simple ‘bulls-eye’ formed by the $25\%$, $50\%$, $75\%$, and $95\%$ probability circles \cite{koester2008lost}. There are more advanced methods that propose a Bayesian model that utilizes publicly available terrain data to help model lost person behaviors enabling domain experts to encode uncertainty in their prior estimations \cite{lin2010bayesian}. Since modeling of the search region probability distribution is not the focus of this paper, we use a simpler approach depending only on location and time of the target sightings. Our search algorithm would perform efficiently independent of the underlying probability distribution model.

Once the probability distribution of the search region is mapped, the task is to search this region efficiently such that the regions with high probability of finding the target (\emph{hotspots}) are searched in the early phase of the exploration. Many complete coverage approaches use line-sweep based techniques \cite{huang2001optimal,xu2011optimal,berger2016area} which are not suitable for search and rescue tasks because of their non-timeliness. 
Cooper \emph{et al.} discuss the relationship between area coverage and search effort density. They claim that any solution to the optimal search density problem is also a solution to the optimal coverage problem~\cite{cooper2003compatibility}. We present a proof showing that coverage of the search region to reduce probability mass is equivalent to searching for the target. We have previously demonstrated an algorithm to selectively cover a region based on an underlying reward distribution \cite{sandeep_crv_2016,sandeep_iros2017}. This technique, however, does not scale up smoothly for larger regions because of the computational complexity.

In this paper, we present an adaptive sampling technique that generates efficient paths to search the missing target as fast as possible by performing non-uniform sampling of the search region. The path generated minimizes the expected time to locate the missing target by visiting high search probability regions using non-myopic path planning based on reinforcement learning. A non-myopic plan is one that accounts for possible observations that can be made in the future~\cite{singh2009nonmyopic}. Fig. \ref{fig:overview} presents an overview of the whole path generation system. Our algorithm gets trained with a generic model of the probability distribution, which can be a map generated by a Gaussian mixture. These learnt parameters are used on the generated lost target probability distribution map to come up with an action plan (\emph{policy} $\pi$). For a given state, an action is chosen according to the probability distribution $\pi$ and thus a path is generated for the searcher robot. Training the system with discounted rewards helps the planner to achieve paths that cover hotspots at the earlier stages of the search task. A major contribution of this paper is the use of non-uniform state aggregation for policy search in the context of robotics.

The key feature of our search algorithm is the ability to generate action plans for any new probability distribution using the parameters learnt on other similar looking distributions, i.e. an action planner trained on generic search and rescue distribution models is capable of planning high rewarding paths with a new probability distribution map without being trained again.

\section{PROBLEM FORMULATION}\label{sec:formulation}

The search region is a continuous two-dimensional area of interest $\mathcal{E} \subset \mathbb{R}^2$ with user-defined boundaries. The spatial search region is discretized into uniform grid cells, such that the robot's position $\robotloc$ and the target's position $\targetloc$ can be represented by two pairs of integers $\robotloc, \targetloc \in \mathbb{Z}^2$. Each grid-cell $(i,j)$ is assigned a prior probability value $\rmap(i,j)$ of the target being present in that cell. 

The aim is to find the target in as short a time as possible. Formally, we specify this objective as maximizing $\mathbb{E}[\exp(-T/c)]$, where $T$ is the time elapsed until the target is found and $c$ is a condition-dependent constant. This objective reflects the assumption that the probability of the target's condition becoming bad is constant, and the aim is to locate the target in good condition. 

We will specify the robot's behavior using a parametrized policy. This is a conditional probability distribution $\pi_{\vec{\theta}}(\vec{s},\vec{a}) = p(\vec{a}|\vec{s};\vec{\theta})$ that maps a description of the current state $\vec{s}$ of the search to a distribution over possible \emph{actions} $\vec{a}$. Our aim will be to automatically find good parameters $\vec{\theta}$, after which the policy can be deployed without additional training on new problems. The maximization objective then becomes:
\begin{equation}
\max_\theta J(\theta)=\max_\theta \mathbb{E}_{\tau_{\vec{\theta}}}[\exp(-T/c)],
\label{eq:objective}
\end{equation}
where the expectation is taken with respect to trajectories $\tau_{\vec{\theta}}=(s_0,a_0,s_1,a_1,...)$ generated by the policy $\pi_{\vec{\theta}}$. 


\section{MODELING OF THE SEARCH AREA}\label{sec:model}

We model the spatial field of search using a classic Gaussians mixture model. 
The map $\rmap$ is initialized according to these generated prior probabilities. An example map is illustrated in Fig.~\ref{fig:model}. Our search algorithm can take any map $\rmap$ in accordance to the prior belief over where the target is. In this work, we will test our method on randomly generated Gaussian mixtures, but the algorithm could be trained on any type of distribution.

\vspace{-1.0em}
\begin{figure}[h]
\centering
\captionsetup{justification=centering}
\includegraphics[width=0.45\textwidth, height=5cm]{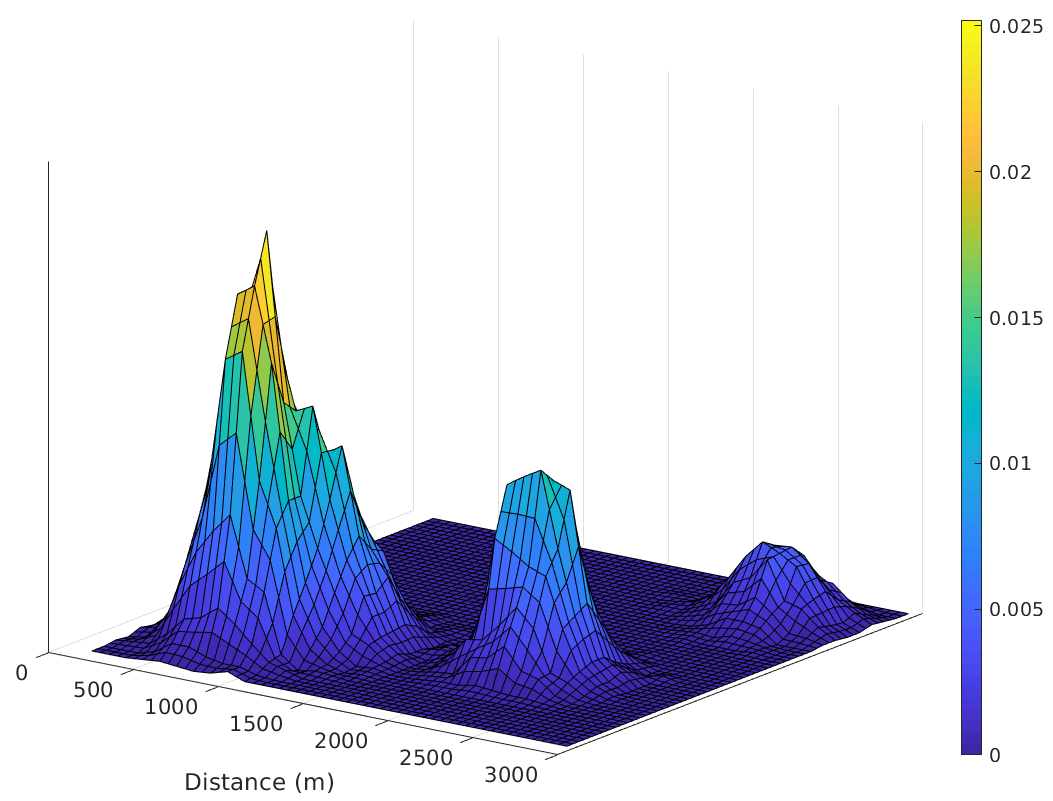}
\caption{Probability distribution modeled with \\Gaussian mixtures.}
\vspace{-1.0em}
\label{fig:model}
\end{figure}

\section{POLICY GRADIENT BASED SEARCH}
Finding a sequence of actions that maximizes a long-term objective could be done using dynamic programming. However, in our formulation the system state is described using a map containing the per-cell probability of the target being present and this map changes as areas are visited by the agent. The result is an extremely large state space where dynamic programming is impracticable - especially if the time to solve each particular problem is limited. 

Instead, we turn to methods that directly optimize the policy parameters $\vec{\theta}$ based on (simulated) experience. To apply these methods, we will first formalize the search problem as a Markov Decision Process (MDP).

\begin{figure*}[hb]
\centering
	\begin{subfigure}[b]{0.24\textwidth}
		\captionsetup{justification=centering}
        \includegraphics[width=\textwidth,height=4cm]{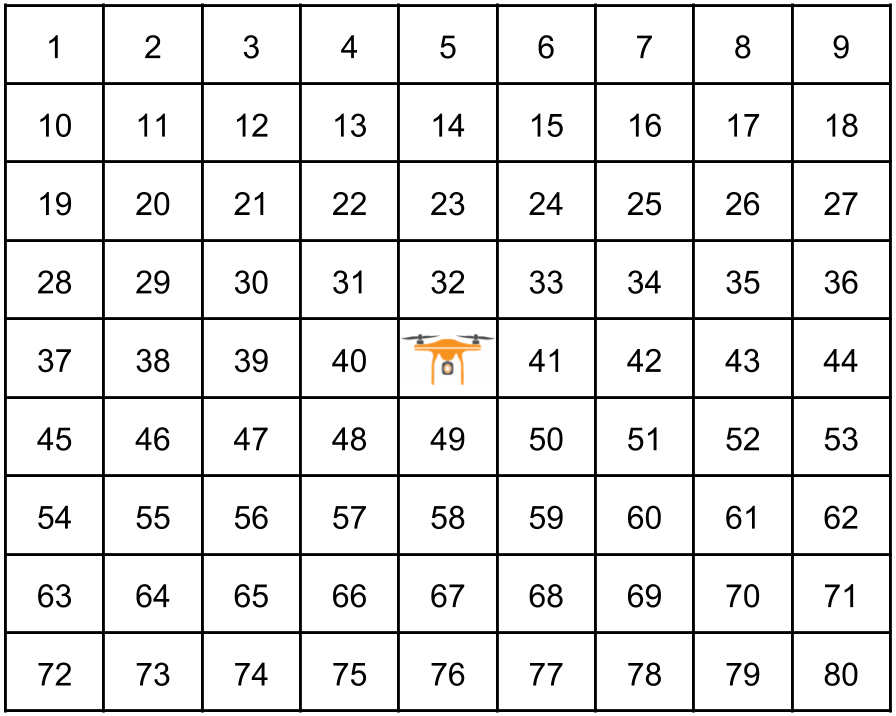}
        \caption{Feature design 1 - centered around the robot with a map of size $9\times9$. This is all-grid uniform feature design.}
        \label{fig:features1}
    \end{subfigure}
    \begin{subfigure}[b]{0.24\textwidth}
		\captionsetup{justification=centering}
        \includegraphics[width=\textwidth,height=4cm]{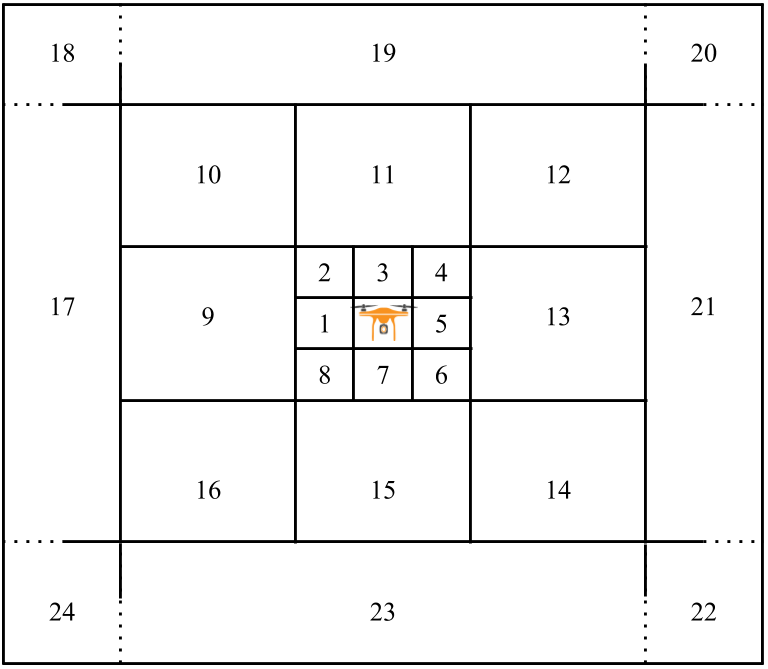}
        \caption{Feature design 2 - centered around the robot with a map of size $n\times n$, with $n \ge 10$. This is a multi-resolution feature design.}
        \label{fig:features2}
    \end{subfigure}
	\begin{subfigure}[b]{0.48\textwidth}
		\captionsetup{justification=centering}
        \includegraphics[width=\textwidth]{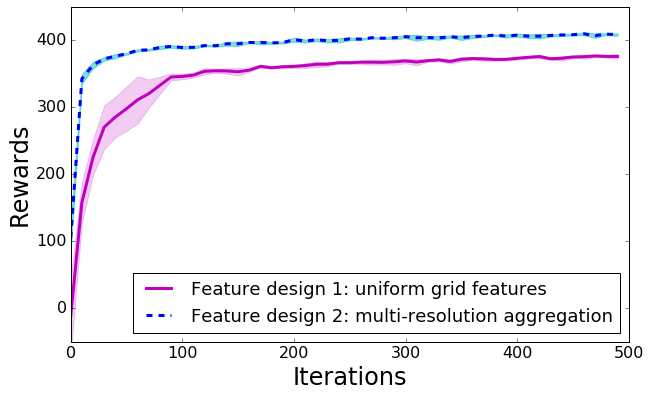}
        \caption{Average accumulated rewards with random starting locations}
        \label{fig:total}
    \end{subfigure}
    \begin{subfigure}[t]{0.48\textwidth}
       \captionsetup{justification=centering}
        \includegraphics[width=\textwidth]{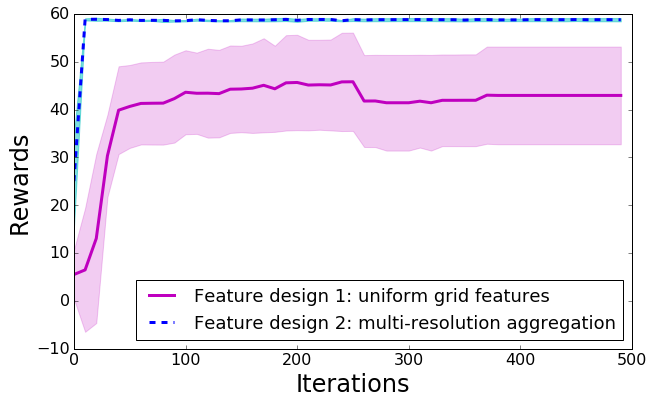}
        \caption{Average discounted rewards with a fixed starting location}
        \label{fig:discounted}
    \end{subfigure}
    \begin{subfigure}[t]{0.48\textwidth}
    \captionsetup{justification=centering}
        \includegraphics[width=\textwidth]{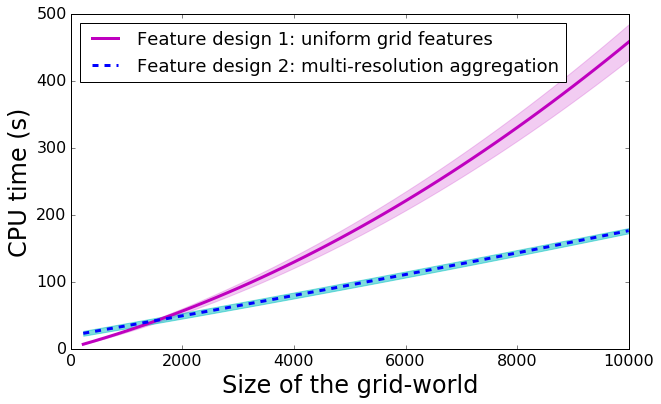}
        \caption{CPU-time to obtain a path vs. growth in the size of the search world}
        \label{fig:cpu_time}
    \end{subfigure}
    \captionsetup{justification=centering}
    \caption{Results from evaluation of two different kinds of feature designs. \\ The shaded region indicates the standard deviation over five trials on three different sized worlds.}
    \label{fig:feat_results}
    \vspace{-1em}
\end{figure*}

\subsection{Formalizing search as MDP}
A Markov Decision Process is a formal definition of a decision problem that is represented as a tuple $(S, A,T(\vec{s}_{t+1}|\vec{s}_t, \vec{a_t}),r(\vec{s}_t, \vec{a}_t),\gamma)$, where $S$ and $A$ are the state and action space, $T$ models transition dynamics based on the current state and action, and $r$ defines the reward for the current state-action pair. $\gamma$ is a discount factor that reduces the desirability of obtaining a reward $t$ time-steps from now rather than in the present by $\gamma^t$. The objective is then to optimize the expected discounted cumulative reward $J=\mathbb{E}_\tau[\sum_{t=0}^{H} \gamma^{t} r_t(s_t,a_t)]$, where $H$ is the optimization horizon.

In the proposed approach, we take the state $\state$ to include the position of the robot $\robotloc$ as well as the map $\rmap$ containing the per-location presence probability for the target, $\state=(\robotloc,\rmap)$. The actions we consider are for the robot to move to and scan the cell North, East, South or West of its current location. Transitions deterministically move the agent in the desired direction. When scanning does not reveal the target, the probability mass $\rmap(i,j)$ of the current cell $(i,j)$ is then reduced to $0$\footnote{More complex models specify a probability of detection (POD) given that robot and target are in the same area \cite{cooper2003compatibility}. For now, our work assumes a probability of detection (POD) of 1. A more realistic POD could easily be included in our approach by updating the probability mass in the cell accordingly. 

Furthermore, note that as probability mass is cleared, the numbers in $\rmap$ no longer sum up to 1, so $\rmap$ is an unnormalized probability distribution.}.

The most intuitive definition of the reward function corresponding to (Eq. \ref{eq:objective}) would give the reward of $1$ for recovering the target, coupled with a discount factor $\gamma=\exp(-1/c)$. However, this reward function has a high variance, as with the exact same search strategy the target could be found quickly, slowly, or not at all due to chance. Instead, we reward the robot for scanning cells with a high probability of containing the target. This does not introduce bias in the policy optimization, while reducing statistical variance\footnote{Proofs are given in the Appendix.}. A lower statistical variance typically allows optimal policies to be learned using fewer sampled trajectories.



\begin{figure*}[hb]
\centering
\begin{subfigure}[t]{0.45\textwidth}
		\captionsetup{justification=centering}
        \includegraphics[width=\textwidth,height=5cm]{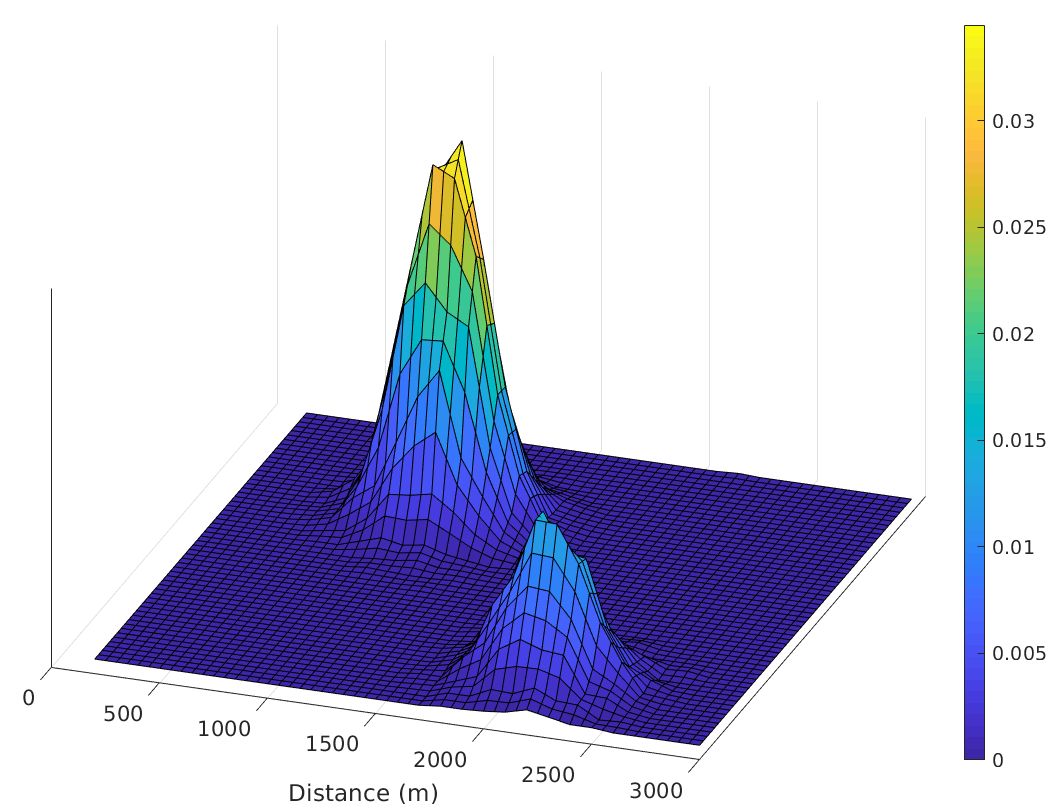}
        \caption{test-scenario1 - mixture of Gaussians based probability distribution}
        \label{fig:test1}
    \end{subfigure}
       \begin{subfigure}[t]{0.45\textwidth}
       \captionsetup{justification=centering}
        \includegraphics[width=\textwidth,height=5cm]{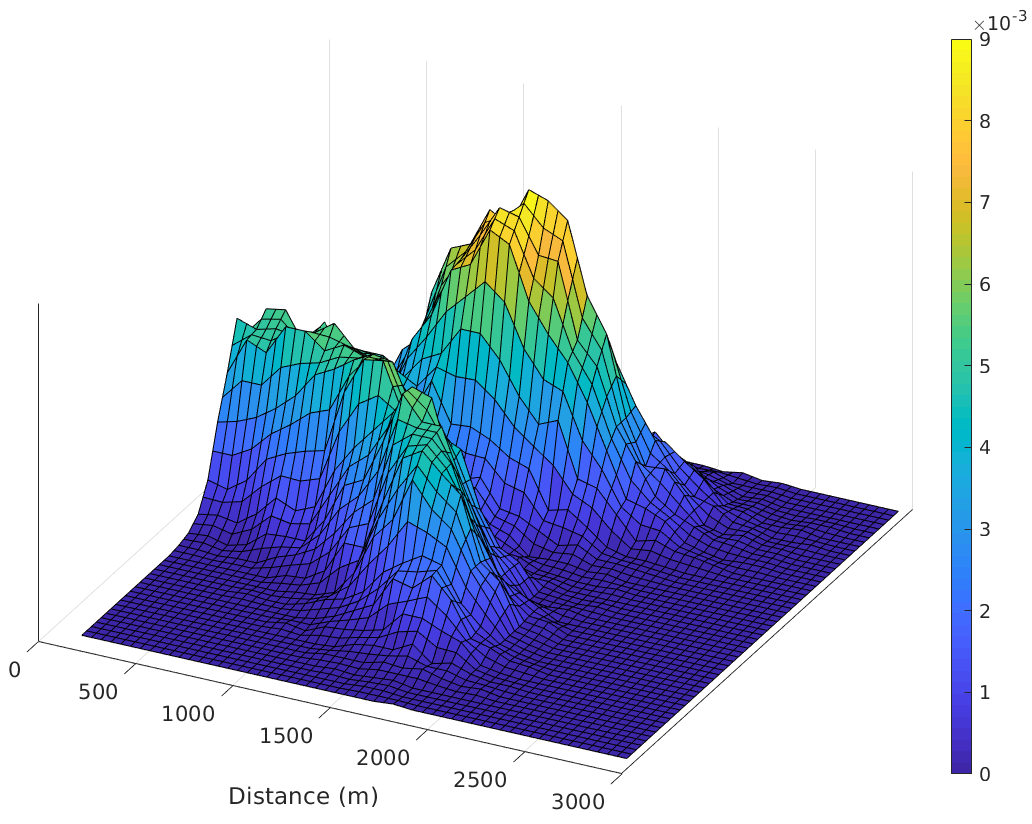}
        \caption{test-scenario2 - non-gaussian probability distribution}
        \label{fig:test2}
    \end{subfigure}
    \caption{Probability distributions used for testing. Colorbars indicate the probability of finding the lost target.}
    \label{fig:results}
    \vspace{-1em}
\end{figure*}

\subsection{Policy Gradient Approach}
Policy gradient methods use gradient ascent for maximizing the expected return $J_\theta$. The gradient of the expected return ($\nabla_\theta J_\theta$) guides the direction of the parameter ($\theta$) update. The policy gradient update is given by,
$$
\theta_{k+1} = \theta_k + \eta \nabla_\theta J_\theta,
$$
where  $\eta$ is the learning rate. The likelihood ratio policy gradient~\cite{williams1992simple} is given by, 
\begin{equation}
\label{eq:update}
\nabla_\theta J_\theta = \int_\tau \nabla_\theta p_\theta(\tau) R(\tau) d\tau
\end{equation}
However, this expression depends on the correlation between actions and previous rewards. These terms are $0$ in expectation, but cause additional variance. Ignoring these terms yields lower-variance updates, which are used in the Policy Gradient Theorem (PGT) algorithm and the GPOMDP algorithm
\cite{baxter2001infinite, sutton2000policy,deisenroth2013survey}. Accordingly, the policy gradient is given by,
\begin{multline}
\label{eq:gradient}
\nabla_\theta J_\theta = \frac{1}{m}\sum_{i=1}^m\sum_{t=0}^{H-1}\nabla_\theta \log \pi_\theta(a_t^{(i)}|s_t^{(i)}) \\
\left(\sum_{j=t}^{H-1}r(s_j^{(i)},a_j^{(i)}) - b(s_t^{(i)})\right).
\end{multline}
In this equation, the gradient is based on $m$ sampled trajectories from the system, with $\vec{s}_j^{(i)}$ the state at the $j^{\text{th}}$ time-step of the $i^{\text{th}}$ sampled roll-outs. Furthermore, $b$ is a variance-reducing baseline. In our experiments, we set it to the observed average reward. 



\subsection{Policy design}
We consider a policy that is Gibbs distribution in a linear combination of features given by
\begin{equation}
\label{eq:policy}
\pi(\mathbf{s},\mathbf{a}) = \frac{e^{\theta^T\phi_{sa}}}{\sum_be^{\theta^T\phi_{sb}}},\hspace{1cm}\forall s \in S; a,b \in A,
\end{equation}
where $\phi_{sa}$ is an $l$-dimensional feature vector characterizing state-action pair ($s,a$) and $\theta$ is an $l$-dimensional parameter vector. This is a commonly used policy in reinforcement learning approaches \cite{sutton2000policy}.
The final feature vector $\phi_{sa}$ is formed by concatenating a vector $\phi'_{s} \delta_{a a'}$ for every action $a' \in \{North, East, South, West\}$, where $\phi'_{s} \subset \mathbb{R}^k$ is a feature representation of the state space, and $\delta_{a a'}$ is the Kronecker delta. Thus, the final feature vector has $4 \times k$ entries, $75\%$ of which corresponding to non-chosen actions will be $0$ at any one time step.

We consider two types of feature representations ($\phi'_{s}$) for our approach. The first feature construction is to consider a vector that represents all rewards in $\rmap$ robot-centric, as illustrated in Fig. \ref{fig:features1}. This feature vector grows in length as the size of the search region increases, resulting in higher computation times for bigger regions. In the second design we consider a multi-resolution feature aggregation resulting in a fixed number ($24$) of features irrespective of the size of the search region. In this case, features corresponding to larger cells are assigned a value equal to the average of values of $\rmap(i,j)$ that fall in that cell. In multi-resolution aggregation, the feature cells grow in size along with the distance from the robot location as depicted in Fig. \ref{fig:features2}. Thus, areas close to the robot are represented with high resolution and areas further from the robot are represented in lower resolution. The intuition behind this feature design is that the location of nearby target probabilities is important to know exactly, while the location of faraway target probabilities can be represented more coarsely. The multi-resolution feature design is also suitable for bigger worlds as it scales logarithmically to the size of the world.

Both these feature designs were tested with the policy gradient based searching algorithm and we found that the multi-resolution aggregated features produce higher accumulated rewards and better discounted rewards (Fig. \ref{fig:total}, \ref{fig:discounted}). Also, the computation gets quadratically expensive as the size of the grid world increases (Fig. \ref{fig:cpu_time}). Based on these results, a non-uniform, multi-resolution, robot-centric feature design is more beneficial for efficient searching. These results further strengthen our belief that the immediate actions are influenced by the nearer rewards and the farther low-resolution features enhance non-myopic planning of the whole path. Theoretically, replacing the individual cell values by their averages causes some information loss. However, to plan for far-away target probabilities, a coarse location is enough. In fact, the results in Fig. \ref{fig:total} and Fig. \ref{fig:discounted} show that, perhaps due to the regulating effect of averaging, multi-resolution grids perform better than the uniform representation even after extensive training.

Further in this paper we will only consider the multi-resolution robot-centric feature design in our policy gradient search algorithm. The aggregated feature design is only used to achieve better policy search, but the robot action is still defined at the grid-cell level.

\section{EXPERIMENTAL RESULTS AND DISCUSSION}
In this section, we will introduce the experimental set-up and baseline methods, before presenting and discussing the experimental results.

\begin{figure*}[h]
\centering
    \begin{subfigure}[t]{0.30\textwidth}
    \captionsetup{justification=centering}
        \includegraphics[width=\textwidth,height=5cm]{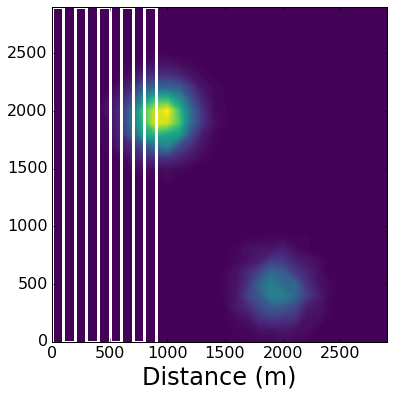}
        \caption{Boustrophedonic coverage \\over test scenario 1}
        \label{fig:lawn_test1}
    \end{subfigure}
    \begin{subfigure}[t]{0.27\textwidth}
    \captionsetup{justification=centering}
        \includegraphics[width=\textwidth,height=5cm]{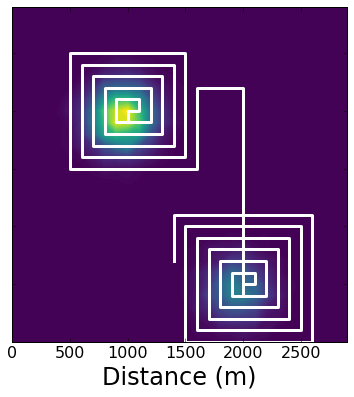}
        \caption{Informed spiral search on test scenario 1}
        \label{fig:spiral_test1}
    \end{subfigure}
    \begin{subfigure}[t]{0.31\textwidth}
    \captionsetup{justification=centering}
        \includegraphics[width=\textwidth,height=5cm]{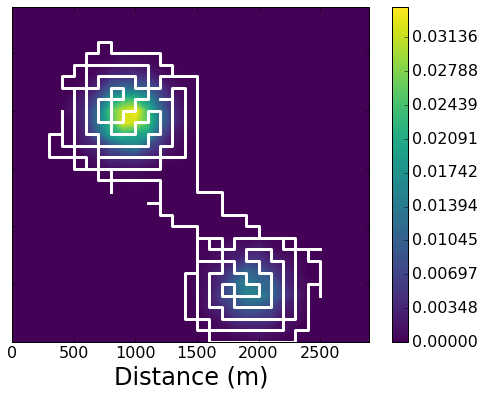}
        \caption{Action plan generated by our method for test scenario 1}
        \label{fig:pg_test1}
    \end{subfigure}
    \begin{subfigure}[t]{0.30\textwidth}
    \captionsetup{justification=centering}
        \includegraphics[width=\textwidth,height=5cm]{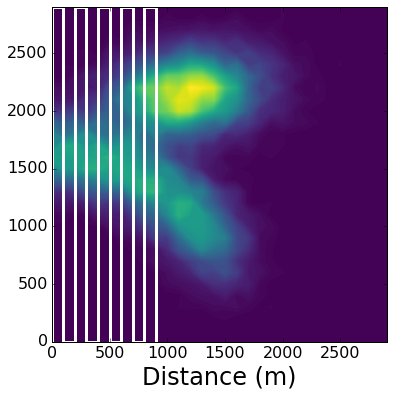}
        \caption{Boustrophedonic coverage \\over the test scenario 2}
        \label{fig:lawn_test2}
    \end{subfigure}
    \begin{subfigure}[t]{0.27\textwidth}
    \captionsetup{justification=centering}
        \includegraphics[width=\textwidth,height=5cm]{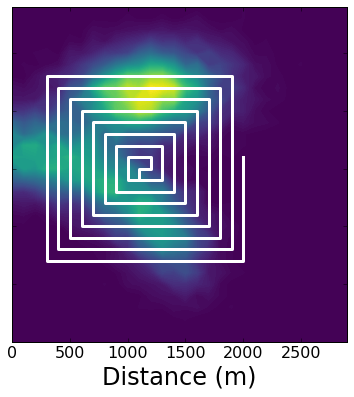}
        \caption{Informed spiral search on test scenario 2}
        \label{fig:spiral_test2}
    \end{subfigure}
    \begin{subfigure}[t]{0.31\textwidth}
    \captionsetup{justification=centering}
        \includegraphics[width=\textwidth,height=5cm]{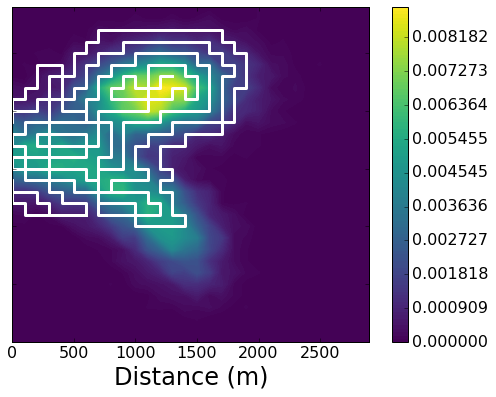}
        \caption{Action plan generated by our method for test scenario 2}
        \label{fig:pg_test2}
    \end{subfigure}
    \begin{subfigure}[t]{0.48\textwidth}
    \captionsetup{justification=centering}
        \includegraphics[width=\textwidth,height=5.5cm]{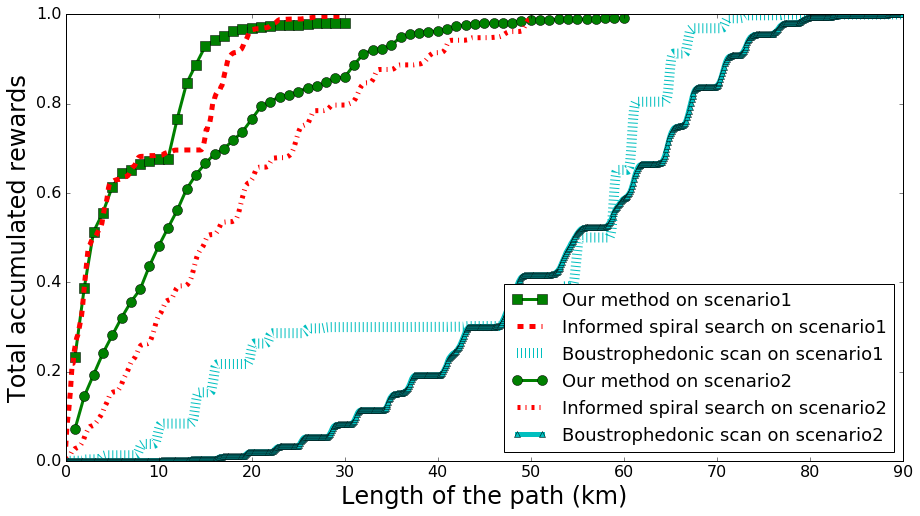}
        \caption{Total accumulated rewards vs. length of the path}
        \label{fig:total_reward}
    \end{subfigure}
    \begin{subfigure}[t]{0.48\textwidth}
    \captionsetup{justification=centering}
        \includegraphics[width=\textwidth,height=5.5cm]{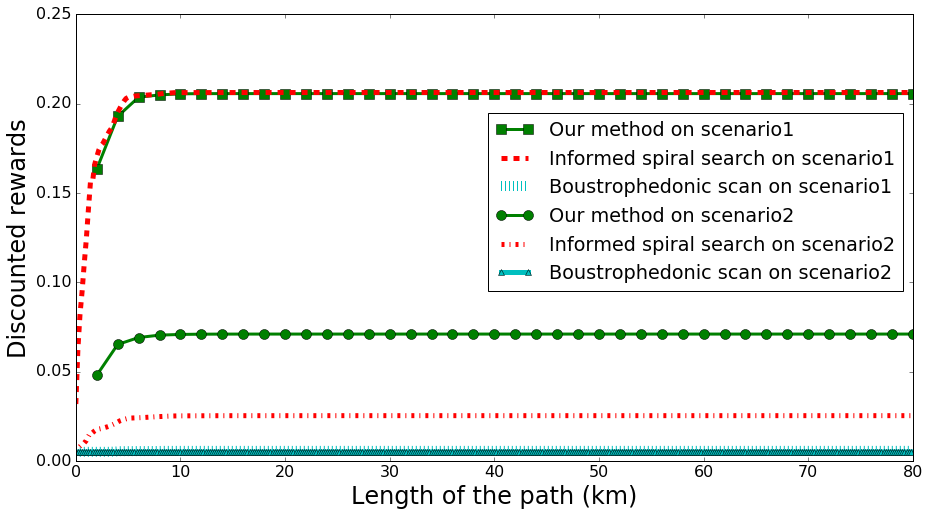}
        \caption{Discounted reward vs. length of the path. For our experiments, we used a discount factor $\gamma = 0.9$}
        \label{fig:disc_reward}
    \end{subfigure}
    \captionsetup{justification=centering}
    \caption{Comparison of three search methods. The trajectories shown are from \\running the three search methods for $300$ time steps.}
    \label{fig:results}
    \vspace{-2em}
\end{figure*}

\subsection{Setup}
We generated a generic training scenario with probability distributions for a lost target using Gaussian mixtures. We model the search space of a simulated aerial search vehicle as a $30\times30$ grid world with each grid-cell spanning $100m\times100m$. We used 20 roll-outs in every iteration of the training phase. A discount factor of $0.9$ was used in these experiments. During the test phase, an action with maximum probability is chosen at a given state. Fig. \ref{fig:model} illustrates the probability distribution grid used for training our policy based searcher. However, as mentioned in Section \ref{sec:model}, the searcher algorithm could be trained on any other type of distribution too. Two significantly different test-scenarios are presented to evaluate and compare the search algorithms. The first test-scenario (Fig. \ref{fig:test1}) comprises of two Gaussians imitating the probability distribution of a lost target. The second test-scenario (Fig. \ref{fig:test2}) cannot be represented as a mixture of a few Gaussians.

\subsection{Baselines}

Traditionally, exhaustive sampling of a partially observable, obstacle-free region employs a boustrophedon path~\cite{Choset97coveragepath}. The \emph{boustrophedon} or \emph{lawnmower path} is the approach a farmer takes when using an ox to plow a field, making back and forth straight passes over the region in alternating directions until the area is fully covered.
Another efficient search pattern reported in the robotic search literature is spiral, which minimizes the time to find a single stationary lost target in several impressive idealized scenarios \cite{burlington1999spiral,meghjani2016multi}.

We compare our search algorithm with these two search techniques. Fig. \ref{fig:lawn_test1}-\ref{fig:pg_test2} illustrate the paths generated by the three coverage algorithms overlaid on top of the test scenario probabilities. The paths shown are generated by running the search algorithms for $300$ time-steps.

\subsection{Results and discussion}

Spiral search can generate efficient, or even optimal, paths under suitable conditions such as the \emph{first test scenario} (Fig. \ref{fig:spiral_test1}). Its satisfactory performance is only assured for a restricted class of unimodal distributions (as opposed to that in Fig. \ref{fig:spiral_test2}). We compare these algorithms based on the rewards collected by removing the probability mass in the search region, corresponding to the (discounted) probability of finding the target. Our goal is to reduce the probability mass as fast as possible by visiting regions with high probability mass (\emph{hotspots}). We use discounted rewards as a metric to measure the timeliness of an algorithm.

The plots in Fig. \ref{fig:total_reward} illustrate how our proposed method is the fastest to cover the target probability mass on both test scenarios, even though all methods eventually visit all states of potential interest. Spiral search performs on par with our algorithm in terms of total rewards accumulated. Nonetheless, policy based searcher out performs both, spiral and boustrophedon search techniques in total discounted rewards on the \emph{second test scenario} by a significant margin. Thus the policy based searcher exhibit a higher order of timeliness, which is a key requirement for any search algorithm to be applicable in search and rescue missions.

It is important to note that despite the dissimilarity between the training scenario and the test scenario, the policy-based search algorithm achieves better performance than the other methods.

\section{CONCLUSIONS}
We presented an optimization algorithm that results in a policy that can generate non-myopic search plans for novel environments in real time. Our policy gradient based search algorithm is well suited for applications in search and rescue because of its ability to come up with a search plan on-the-go, based on the new probability distribution of the lost target, with no time wasted on retraining. Other time-critical applications like aerial surveys after a calamity, water surface surveys to monitor and contain algal blooms, and searching for remains under ocean after an accident, can use the presented algorithm to explore the region of interest.

In the near future, we would like to enhance the performance of the search algorithm by exploring different feature aggregation techniques and designing an adaptive aggregation that can combine features in real-time according to the observations made during the survey.

\section*{ACKNOWLEDGMENT}

We would like to thank the the Natural Sciences and Engineering Research Council (NSERC) for their support through the Canadian Field Robotics Network (NCFRN).



\appendix
First, we want to show that giving a reward for clearing probability mass instead of locating the target does not bias the objective function.

\begin{prop}

\[
  \mathbb{E}_{\tau_{\vec{\theta}}}\left[\sum_{t=0}^{H-1} \gamma^t r(\vec{s}_t,\vec{a}_t) \right] = \mathbb{E}_{\tau_{\vec{\theta}}, \targetloc}[ R(\tau)]   
\]
where the discount factor $\gamma=\exp(-1/c)$, $\targetloc$ is the target's location, and the proxy reward $r(\vec{s}_t, \vec{a}_t)$ is equal to the probability of the target being at the robot's location $\robotloc$ according to $\rmap$\footnote{The reward map $\rmap$ is not normalized, yet after clearing a fraction $g$ or probability mass, the probability that the target has not been found yet is $1-g$. If the target were not found yet, the normalized probability that the target is in cell $(i,j)$ is $q(i,j)/(1-g)$. So the probability of finding the target in $(i,j)$ indeed equals $(1-g)q(i,j)/(1-g)=q(i,j)$.}. $R(\tau) = \exp(-T/c)$ with $T$ is the time until the target is found if the target is found within $H$ time steps, or $0$ otherwise\footnote{The proof here assumes a static target for notational simplicity, but can be generalized to dynamic targets by  making both $r$ and $\targetloc$ time-step dependent, and calculating expected values over all $\targetloc_1,\ldots, \targetloc_H$ jointly.}. 
\label{prop:expectation}
\end{prop}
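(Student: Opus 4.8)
The plan is to prove the identity by conditioning on a fixed trajectory and taking the inner expectation over the target location $\targetloc$ first. The key structural observation is that the deterministic transitions and the probability-mass clearing are functions only of the robot's path, not of where the target actually sits, so I can write the joint expectation as $\mathbb{E}_{\tau_{\vec{\theta}}, \targetloc}[R(\tau)] = \mathbb{E}_{\tau_{\vec{\theta}}}\big[\,\mathbb{E}_{\targetloc}[R(\tau)\mid \tau_{\vec{\theta}}]\,\big]$. It then suffices to show that for every fixed trajectory the inner expectation over $\targetloc$ equals the conditional discounted proxy return $\sum_{t=0}^{H-1}\gamma^t r(\state_t,\vec{a}_t)$; applying the outer expectation over $\tau_{\vec{\theta}}$ to both sides gives the claim.

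First I would fix the trajectory and record the sequence of cells $\robotloc_0,\robotloc_1,\dots$ that the robot scans. Because a cell's mass is set to $0$ the instant it is scanned, the proxy reward $r(\state_t,\vec{a}_t)$ equals the original prior value $\rmap_0(\robotloc_t)$ (the map before any clearing) when $t$ is the \emph{first} time $\robotloc_t$ is visited, and equals $0$ on any later revisit. Hence the conditional left-hand side collapses to a sum over first-visited cells, each contributing $\gamma^{t}\,\rmap_0(\robotloc_t)$ indexed by its first-visit time $t$.

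Next I would compute the inner expectation on the right. Since the static target is drawn from the prior, $\Pr[\targetloc=(i,j)] = \rmap_0(i,j)$, and for a fixed path the target is discovered exactly at $T(i,j)$, the first time the robot scans cell $(i,j)$ (with contribution $0$ if that never occurs within the horizon $H$). Using $\gamma=\exp(-1/c)$, so that $\exp(-T/c)=\gamma^{T}$, this yields $\mathbb{E}_{\targetloc}[R(\tau)\mid\tau] = \sum_{(i,j)\,\text{visited}} \rmap_0(i,j)\,\gamma^{T(i,j)}$, which is term-for-term identical to the collapsed left-hand side once the first-visited cells are reindexed by their visit time. This step also relies on the footnote's observation that the unnormalized entry $\rmap(i,j)$ is legitimately the unconditional probability of finding the target in $(i,j)$ at the moment of scanning, so that the prior weight used on the right and the proxy reward used on the left carry the same numerical value.

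The main obstacle I expect is the bookkeeping around repeated visits and the clearing dynamics: I must make precise that clearing causes the proxy return to count each cell at most once and exactly at its first-visit time, so that it lines up with the single discovery event on the right-hand side, and I must invoke the normalization argument carefully to equate an unnormalized map entry with a genuine discovery probability. Once these two facts are established, the remaining work is a reindexing of a finite sum, and the reduction to the conditional statement makes the dependence of $\tau_{\vec{\theta}}$ on the (target-independent) clearing dynamics harmless.
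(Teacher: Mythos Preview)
Your proposal is correct and follows essentially the same route as the paper: fix the trajectory, take the inner expectation over $\targetloc$ first, use $\gamma=\exp(-1/c)$ to rewrite $R(\tau)=\gamma^{T}$, and then match the per-trajectory discounted proxy return to $\mathbb{E}_{\targetloc}[R(\tau)]$ before applying the outer expectation over $\tau_{\vec{\theta}}$. Your explicit first-visit bookkeeping for revisited cells is in fact more careful than the paper's indicator-sum shortcut $R(\tau)=\sum_{t}\gamma^{t}\delta(\robotloc_t,\targetloc)$, which tacitly treats each cell as visited at most once.
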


\begin{proof}
For any trajectory, 
\begin{align*}
\mathbb{E}_{\targetloc} R(\tau) &= \begin{cases}\mathbb{E}_{\targetloc} \left[\exp{(-1/c)}^T\right] & \textnormal{if } T\le H \\ 0 & \textnormal{otherwise}\end{cases}\\
&= \mathbb{E}_{\targetloc} \left[\sum_{t=0}^{H-1} \gamma^t \delta(\robotloc_t, \targetloc)\right]  \\
&=  \sum_{t=0}^{H-1} \gamma^t \mathbb{E}_{\targetloc} [\delta(\robotloc_t, \targetloc)] = \sum_{t=0}^{H-1} \gamma^t r(\vec{s}_t,\vec{a}_t),
\end{align*}
where $\delta$ is the Kronecker delta. Since this equality holds for any trajectory, it must hold for a linear combination of trajectories.
\end{proof}

Now, we want to show that the variance of gradient estimates using the proxy reward for clearing probability mass is lower than that of the gradient estimates using the original objectives.
\begin{prop}
\begin{multline*}
\Var \sum_{t=0}^{H-1}\nabla_\theta \log \pi_{\vec{\theta}}(\vec{a}_t|\vec{s}_t) 
\sum_{j=t}^{H-1} \gamma^j r(\vec{s}_j,\vec{a}_j) \\
\le 
\Var \mathbb{E}_X\sum_{t=0}^{H-1}\nabla_\theta \log \pi_\theta(\vec{a}_t|\vec{s}_t) 
\sum_{j=t}^{H-1} \gamma^j \delta(\robotloc_j, \targetloc)
\end{multline*}
\end{prop}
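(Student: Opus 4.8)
The plan is to treat this as a Rao--Blackwell / law-of-total-variance statement, using Proposition~\ref{prop:expectation} as the bridge. The per-term version of that proposition's computation gives $\mathbb{E}_{\targetloc}[\delta(\robotloc_j,\targetloc)\mid\tau]=r(\vec{s}_j,\vec{a}_j)$ for every fixed trajectory $\tau$. Since the score factors $\nabla_\theta\log\pi_{\vec{\theta}}(\vec{a}_t\mid\vec{s}_t)$ and the discount weights $\gamma^j$ are functions of $\tau$ alone (the transitions are deterministic and $r$ is read off the current map), they are $\tau$-measurable and pull outside the conditional expectation $\mathbb{E}_{\targetloc}[\,\cdot\mid\tau]$. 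Consequently the left-hand integrand is \emph{exactly} the conditional expectation, over the target location, of the right-hand (indicator-based) integrand.

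First I would name the two gradient samples explicitly. Let $G(\tau,\targetloc)=\sum_{t=0}^{H-1}\nabla_\theta\log\pi_{\vec{\theta}}(\vec{a}_t\mid\vec{s}_t)\sum_{j=t}^{H-1}\gamma^j\delta(\robotloc_j,\targetloc)$ be the estimator built from the original finding-the-target reward, whose randomness ranges over both the policy-generated trajectory and the uncontrolled target location; and let $\tilde G(\tau)=\mathbb{E}_{\targetloc}[G\mid\tau]$, whose randomness ranges over $\tau$ only. By the previous paragraph, $\tilde G$ coincides termwise with the proxy integrand $\sum_t\nabla_\theta\log\pi_{\vec{\theta}}(\vec{a}_t\mid\vec{s}_t)\sum_{j\ge t}\gamma^j r(\vec{s}_j,\vec{a}_j)$ on the left. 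Stating at the outset which randomness each $\Var$ ranges over is important, because the two sides live on different sample spaces.

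Next I would invoke the law of total variance, conditioning on $\tau$:
\[
\mathrm{Cov}(G)=\mathbb{E}_{\tau}\!\left[\mathrm{Cov}(G\mid\tau)\right]+\mathrm{Cov}\!\left(\mathbb{E}_{\targetloc}[G\mid\tau]\right)=\mathbb{E}_{\tau}\!\left[\mathrm{Cov}(G\mid\tau)\right]+\mathrm{Cov}(\tilde G).
\]
A conditional covariance matrix is positive semidefinite and an average of positive-semidefinite matrices is positive semidefinite, so the first term satisfies $\mathbb{E}_{\tau}[\mathrm{Cov}(G\mid\tau)]\succeq 0$ in the Loewner order, giving $\mathrm{Cov}(\tilde G)\preceq\mathrm{Cov}(G)$. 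Taking the trace (equivalently, reading off the sum of component variances, which is how $\Var$ of the vector-valued estimator in the statement should be understood) yields $\Var(\tilde G)\le\Var(G)$, which is precisely the claimed inequality after substituting the identifications of $\tilde G$ and $G$.

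The main obstacle, and the only place real care is needed, is the bookkeeping that promotes Proposition~\ref{prop:expectation} from a statement about returns to one about the \emph{full} gradient integrand: I must argue that the score and discount factors survive unchanged under $\mathbb{E}_{\targetloc}[\,\cdot\mid\tau]$ by measurability, so that $\mathbb{E}_{\targetloc}[G\mid\tau]$ really equals the proxy estimator rather than something merely proportional to it. A secondary subtlety is that the raw law of total variance is a matrix identity, so the reduction holds rigorously as a Loewner inequality and only becomes the scalar inequality after tracing; making this explicit also exhibits the size of the gap between the two sides as the discarded nonnegative term $\mathbb{E}_{\tau}[\mathrm{Cov}(G\mid\tau)]$, i.e.\ exactly the variance injected by the target's location.
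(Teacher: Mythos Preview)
Your proof is correct and takes a genuinely different route from the paper. You recognize the proposition as a Rao--Blackwellization: the proxy-reward gradient estimator is exactly the conditional expectation, over the target location, of the indicator-based estimator given the trajectory, so the law of total variance yields $\Var(\tilde G)\le\Var(G)$ with gap $\mathbb{E}_\tau[\mathrm{Cov}_{\targetloc}(G\mid\tau)]\succeq 0$. The paper instead expands both variances by hand, cancels the squared means via Proposition~\ref{prop:expectation}, and then exploits two problem-specific facts---that $\delta(\robotloc_i,\targetloc)\delta(\robotloc_j,\targetloc)=0$ for $i\ne j$ (the target cannot be found twice) and that $\sum_k r_k\le 1$---to upper-bound the difference of second moments by an explicit negative weighted sum of squares. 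Your argument is shorter and strictly more general: it uses neither the disjointness of the finding events nor the normalization bound, and it would apply verbatim to any proxy reward obtained by integrating out auxiliary randomness. You also treat the vector-valued gradient more carefully, working in the Loewner order on covariance matrices before tracing down to a scalar inequality, whereas the paper's squared expressions tacitly treat the estimator as one-dimensional. What the paper's hands-on computation buys is a concrete algebraic witness for nonpositivity (the completed square in its final display), at the cost of an intermediate loosening via $\sum_k r_k\le 1$; your approach instead identifies the exact slack as the conditional variance that averaging over $\targetloc$ discards.
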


\begin{proof}
We use the definition of the variance and rearrange terms as in the GPOMDP method \cite{baxter2001infinite} to reformulate the proposition. We introduce the shorthand $r_j = r(\vec{s}_j, \vec{a}_j)$ make the equations more readable, and obtain
\begin{align}
&\mathbb{E}_\tau \left[ \left( \sum_{j=0}^{H-1} \gamma^j r_j \sum_{t=0}^j \nabla_{\vec{\theta}} \log \pi_{\vec{\theta}}(\vec{a}_t|\vec{s}_t) \right)^2 \right] \label{eq:expsqproxy}\\
& - \mathbb{E}_\tau \left[  \sum_{j=0}^{H-1} \gamma^j r_j \sum_{t=0}^j \nabla_{\vec{\theta}} \log \pi_{\vec{\theta}}(\vec{a}_t|\vec{s}_t)  \right]^2 \label{eq:sqexpproxy} \\
\le \phantom{.} &\mathbb{E}_\tau \left[ \left( \mathbb{E}_{\targetloc} \sum_{j=0}^{H-1} \gamma^j \delta(\robotloc_j, \targetloc) \sum_{t=0}^j \nabla_{\vec{\theta}} \log \pi_{\vec{\theta}}(\vec{a}_t|\vec{s}_t) \right)^2 \right] \label{eq:expsqobj}\\
& - \mathbb{E}_\tau \left[ \mathbb{E}_X \sum_{j=0}^{H-1} \gamma^j \delta(\robotloc_j, \targetloc) \sum_{t=0}^j \nabla_{\vec{\theta}} \log \pi_{\vec{\theta}}(\vec{a}_t|\vec{s}_t)  \right]^2. \label{eq:sqexpobj}
\end{align}
Note that \eqref{eq:sqexpproxy} and \eqref{eq:sqexpobj} are expectations of unbiased estimates of the respective objective, and so equal  the   gradient of the expected value of the respective objective.  By Proposition~\ref{prop:expectation}, these expected values are the same, so \eqref{eq:sqexpproxy} and \eqref{eq:sqexpobj} cancel each other out. Writing out the squares in \eqref{eq:expsqproxy} and \eqref{eq:expsqobj} and combining like terms yields
\begin{align*}
&\mathbb{E}_\tau \left[ \sum_{j=0}^{H-1} \sum_{i=0}^{H-1}   \gamma^{i+j} \left( r_j  r_i  -  \mathbb{E}_{\targetloc} \left[ \delta(\robotloc_i, \targetloc)  \delta(\robotloc_j, \targetloc)  \right]\right)  S_j S_i \right] \le 0
\end{align*}
where we introduced the shorthand
\[
S_j = \sum_{t=0}^j \nabla_{\vec{\theta}} \log \pi_{\vec{\theta}}(\vec{a}_t|\vec{s}_t),
\]
and used the fact that $S_j$, $S_i$ are independent of the target location $\targetloc$. Note that $r_i$ is just the probability that $\delta(\robotloc_i, \targetloc)=1$,  and since the agent cannot find the target twice, $\delta(\robotloc_i, \targetloc)\delta(\robotloc_j, \targetloc)$ is non-zero only if $i=j$. Thus, the proposition is equivalent to
\begin{align}
\mathbb{E}_\tau \left[ \sum_{j=0}^{H-1} \sum_{i=0}^{H-1} \gamma^{i+j} \left(r_j  r_i  -  \delta(i,j) r_i \right)  S_j S_i \right] \le 0. \label{eq:propequivalent}
\end{align}
The left-hand side of this inequality can be expressed and upper-bounded as follows:
\begin{align}
& \mathbb{E}_\tau \left[ \sum_{j=0}^{H-1} \sum_{i=0}^{H-1} \gamma^{i+j} \left(r_j  r_i  -  \delta(i,j) r_i \right)  S_j S_i \right] \label{eq:propequivalent2}\\
& =\sum_{i=0}^{H-1} \sum_{j=0}^{H-1} \gamma^{i+j} r_i r_j S_i S_j -\sum_{i=0}^{H-1} r_i \gamma^{2i} S_i^2 \nonumber \\
&\le \left(2-\sum_{k=0}^{H-1}r_k\right) \sum_{i=0}^{H-1} \sum_{j=0}^{H-1} \gamma^{i+j} r_i r_j S_i S_j -\sum_{i=0}^{H-1} r_i \gamma^{2i} S_i^2  \nonumber \\
&=-\mathbb{E}_\tau \left[ \sum_{i=0}^{H-1} r_i \left(\gamma^{i} S_i -  \sum_{j=0}^{H-1} \gamma^{j} r_j S_j\right)^2 \right]. \label{eq:upperbound}
\end{align}
The inequality is due to the sum of rewards always being smaller than 1 (as the rewards denote probabilities of finding the target, and would sum up to 1 if and only if the robot visits all cells where probability mass was initially present in this trajectory). Note that the expected value in \eqref{eq:upperbound} is non-negative: it is a weighted sum of squared terms, where all weights are non-negative (again, due to their interpretation as probabilities). Thus, \eqref{eq:upperbound} is non-positive, so  \eqref{eq:propequivalent2} must $\le 0$, confirming \eqref{eq:propequivalent}.
\end{proof}

\bibliographystyle{IEEEtran}
\balance
\bibliography{root,refs}

\end{document}